\newcommand{\Rmnum}[1]{\expandafter\@slowromancap\romannumeral #1@}
\newcommand{\black}{\color{black}}
\newtheorem{theorem}{Theorem}
\newtheorem{definition}{Definition}
\newtheorem{remark}{Remark}
\titlespacing{\section}{2pt}{2pt}{2pt} 
\begin{document}

\title{Filtered Randomized Smoothing: A New Defense for Robust Modulation Classification}

\author[]{Wenhan Zhang$^*$~~~Meiyu Zhong$^*$~~~Ravi Tandon~~~Marwan Krunz}

\affil[]{Department of ECE, University of Arizona, Tucson, AZ \authorcr E-mail: \{\textit{wenhanzhang, meiyuzhong, tandonr, krunz}\}@arizona.edu}

\maketitle
\def\thefootnote{*}\footnotetext{Equal Contribution }\def\thefootnote{\arabic{footnote}}
\thispagestyle{plain}
\pagestyle{plain}
\begin{abstract}
Deep Neural Network (DNN) based classifiers have recently been used for the modulation classification of RF signals. These classifiers have shown impressive performance gains relative to conventional methods, however, they are vulnerable to imperceptible (low-power) adversarial attacks. Some of the prominent defense approaches include adversarial training (AT) and randomized smoothing (RS). While AT increases robustness in general, it fails to provide resilience against previously unseen adaptive attacks. Other approaches, such as Randomized Smoothing (RS), which injects noise into the input, address this shortcoming by providing provable certified guarantees against arbitrary attacks, however, they tend to sacrifice accuracy. 
 
In this paper, we study the problem of designing robust DNN-based modulation classifiers that can provide provable defense against arbitrary attacks without significantly sacrificing accuracy. To this end, we first analyze the spectral content of commonly studied attacks on modulation classifiers for the benchmark RadioML dataset. We observe that spectral signatures of un-perturbed RF signals are highly localized, whereas attack signals tend to be spread out in frequency.  To exploit this spectral heterogeneity, we propose Filtered Randomized Smoothing (FRS), a novel defense which combines spectral filtering together with randomized smoothing. FRS can be viewed as a strengthening of RS by leveraging the specificity (spectral Heterogeneity) inherent to the modulation classification problem. In addition to providing an approach to compute the certified accuracy of FRS, we also provide a comprehensive set of simulations on the RadioML dataset to show the effectiveness of FRS and show that it significantly outperforms existing defenses including AT and RS in terms of accuracy on both attacked and benign signals. 

\end{abstract}
\begin{IEEEkeywords}
Signal Classification, Certified Defense, Filtering, Randomized Smoothing
\end{IEEEkeywords}

\section{Introduction}
In recent years, Deep Neural Network (DNN) based classifiers have emerged as a promising alternative for modulation classification in wireless systems. Leveraging the ability of DNNs to learn complex patterns and features from raw data, DNN-based classifiers offer promising performance in accurately identifying the modulation scheme using in-phase/quadrature (I/Q) samples. However, these DNNs are prone to low-power \textit{imperceptible} attacks, which can be readily generated using adversarial machine learning (AML) based methods \cite{Zhang2021MILCOM, Zhang2024TMLCN, Kim2021TWC, Flowers2020TIFS, Adesina2023CST}. The broadcast nature of the wireless medium makes AML attacks a significant threat and roadblock for widespread deployment of DNN-based classifiers. For instance, an adversary can broadcast such low-power AML perturbations to degrade the signal identification accuracy of legitimate users and spoof the wireless operator.

To build robust modulation classifiers against AML attacks, recent work has developed several defense mechanisms, often adapted from the existing pool of defenses which were designed for generic classifiers.  
Olowononi {\em et al.} \cite{Olowononi2020CST} proposed an encryption mechanism to hide the DNN internal weights, parameters, and training data from an adversary. He {\em et al.} \cite{He2023CST} evaluated Adversarial Training (AT), randomization, defensive distillation, and gradient masking to defend against adversarial attacks. Zhang {\em et al.} \cite{Zhang2024TMLCN} presented adversarial training, autoencoder-based denoising, and classifier ensembling to mitigate the impact of AML attacks. 

 While the above defenses do improve the resilience of DNN classifiers, most of these heuristics ultimately fail to generalize against stronger and previously unseen adaptive attacks. Therefore, a line of work focusing on the notion of \textit{certified defense} has emerged, wherein the classifier must guarantee to maintain consistent predictions within the adversarial attack budget, thereby ensuring robustness. 
 \begin{figure*}
    \centering
    \includegraphics[scale = 0.38]{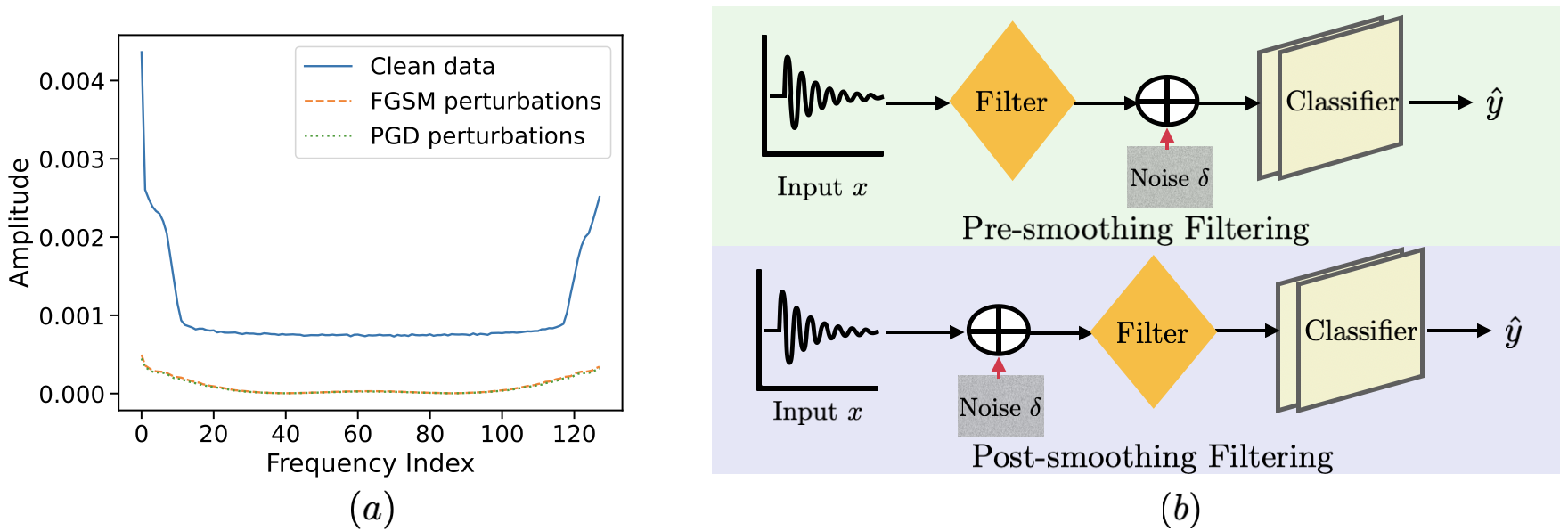}
    \vspace{-0.18 in}
    \caption{(a) Comparison of the frequency content of clean signals versus two attack signals, FGSM- and PGD-based perturbations. The figure shows the amplitude of frequency components: FFT averaged over data at 18 dB. (b) Illustration of filtered randomized smoothing (FRS) defense, with two variations: post-smoothing filtering (Theorem \ref{the: certified_radius}) and pre-smoothing filtering (Theorem \ref{the:filter_rs}).}
    \label{fig:filter_fig}
    \vspace{-0.2 in}
\end{figure*}
One of the most important certified robustness mechanisms, known as Randomized Smoothing (RS), is introduced in \cite{Cohen2019ICML,Zhai2020MACER:, zhong2024splitz}. The main idea of RS is to add multiple independent realizations of noise (e.g., Gaussian noise) to the input; each of which is passed to the classifier. The respective decisions of the noisy input are then combined to make a classification decision; the resulting classifier can be shown to be provably robust within a certified radius (maximum allowable attack budget), which is a function of the noise strength. Research on RS-based certified defense has been explored in several directions:  Zhai et al. \cite{Zhai2020MACER:} introduce a regularization strategy that maximizes the approximate certified radius,  Salman et al. \cite{salman2019provably} integrate adversarial training with smoothed classifiers and Kim et al. \cite{Kim2021TWC} investigate the RS in the wireless domain.


\textit{Spectral Heterogeneity in Clean vs. Attack RF Signals:} The above defense approaches are generic in nature and do not necessarily exploit the structure of RF signals and waveforms. To this end, we conducted an in-depth study of the spectral composition of RF signals drawn from a well-studied benchmark RadioML dataset, as well as AML attacks on these signals. Our  findings led to the following observations: clean (un-attacked) RF signals from the waveform typically tend to concentrate in a low-frequency range. In contrast, the frequencies of natural noise and AML attacks are spread over a wider interval. For instance, Fig. \ref{fig:filter_fig}(a) shows an illustrative example of this phenomenon on signal selected from RadioML dataset: the energy of the clean signal is concentrated below $15$, whereas the spectrum of two common AML attacks, namely Fast Gradient Sign Method (FGSM) \cite{Goodfellow2015ICLR} and Projected Gradient Descent (PGD) \cite{madry2018towards} perturbations are widely spread. 


\noindent \textbf{Overview of Filtered RS \& Contribution:}  To utilize this spectral heterogeneity, we propose Filtered Randomized Smoothing (FRS), a novel defense that combines spectral filtering together with randomized smoothing.
The main idea behind FRS is to filter the input RF signal by attenuating high-frequency components, which serve the role of reducing the contribution of AML attacks, without degrading the contribution of the legitimate RF signal. We combine filtering with randomized smoothing—adding noise either before or after the filter—so we can strengthen the theoretical foundation of the filtered-RS model. Fig. \ref{fig:filter_fig}(b) shows the conceptual illustration of the proposed FRS defense. The main contributions of this paper are summarized next:

\begin{itemize}
  \item We present a comprehensive spectral analysis of a benchmark RadioML modulation classification dataset (both on clean signals as well as adversarial attacks such as FGSM and PGD). We observe that training a modulation classifier on filtered signals alone can achieve $20$\% higher test accuracy than a regularly trained classifier under both FGSM and PGD attacks, on average.
  \item To further enhance the robustness of filter-based mechanisms and achieve certified robustness, we introduce \textit{Filtered Randomized Smoothing (FRS)} with two variations: pre-smoothing filtering and post-smoothing filtering and also provide theoretical results on the certified robustness.
  \item We provide a comprehensive experimental evaluation of FRS and compare it with adversarial training and conventional randomized smoothing. Our experimental findings demonstrate that our proposed Filtered Randomized Smoothing classifier outperforms other models, including those utilizing AT and RS, in terms of certified test accuracy.
\end{itemize}

\section{Problem Statement and Preliminaries}
We represent a modulation classifier through a mapping $\hat y$ = $f(x)$, where the input $x \in \mathbb{R}^{2\times W}$ represents a window of I/Q samples with window size $W$. The first (second) row represents the sequence of I (Q) samples, respectively. The output $\hat y$ represents a probability distribution over $\{1,2,\ldots, K\}$, where $K$ is the number of possible modulation schemes (classes). Our paper aims to make a robust classifier such that $f(x)=f(x+\delta)$ where $\delta$ is the perturbation generated by the adversary and constraint by $\left \|\delta  \right \|_2 \leq \epsilon$ and $\epsilon$ stands for the energy budget for attacks. 
We define the Signal-to-Perturbation Ratio (SPR) as the energy ratio of the received signal and the perturbation: $E(x)/E(\delta)$,  
where $E(x)$ is the average clean signal energy before the additive perturbation.
We next give a brief overview of the two most prominent defense techniques, namely: 1) Adversarial Training (AT) and 2) Randomized Smoothing (RS). 

\textbf{Adversarial Training}
The main idea behind AT~\cite{Goodfellow2015ICLR, Adesina2023CST, Zhang2024TMLCN} is as follows: we start with a base classifier, and generate adversarial attacks on the training data. Subsequently, the training data is augmented with these attacked signals and the classifier is re-trained using the following loss: 
\begin{equation}
    \tilde{L}(x,y;\theta )=\gamma L(x,y;\theta )+(1-\gamma)L(x_{adv},y;\theta ),
\end{equation}
where $\gamma$ controls the balance between benign and adversarial data. In our experiments, we use the default value of $\gamma = 1/2$, as suggested in~\cite{Goodfellow2015ICLR}, and it gives us the best accuracy under both benign and adversarial data. Although AT provides robustness against AML attacks, the main shortcoming is that it relies on the knowledge of Adversarial Examples (AEs) which are created using specific attacks. While classifiers trained using AT perform better against the attacks that were used in AT, however, it has been shown \cite{jia2022adversarial, de2022make} that 
such classifiers are not robust to previously un-seen adaptive attacks.


\textbf{Certified Defense and Randomized Smoothing}
The above lack of generalizability of AT has led to the stronger notion of certified robustness as defined next:
\begin{definition}(Certified Robustness) \label{def: certified robustness}
A (randomized) classifier $f$ satisfies $(\epsilon, \alpha)$ certified robustness if for any input $x$, we have 
\begin{center}
    $\mathbb{P}(f(x) = f(x')) \geq 1- \alpha$, $\forall x'$, such that $ x' = x + \delta, ~~\parallel\delta\parallel_p \leq \epsilon$.
\end{center}
\end{definition}
Certified robustness requires that a classifier's decision remains unchanged in a local neighborhood around any given test input $x$. Specifically, for all inputs $x'$ near $x$, where the distance between $x'$ and $x$ under the $\ell_p$ norm ($\parallel x'-x \parallel_{p}$) is less than or equal to $\epsilon$, the classifier's output should be the same as that for $x$, i.e., $f(x) = f(x')$, with high probability. Therefore, $\epsilon$ is defined as the certified radius, and $(1-\alpha)$ quantifies the confidence level. For the scope of this work, we focus primarily on the $\ell_2$ norm ($p=2$).

\textit{Randomized Smoothing:} RS was introduced and analyzed in \cite{Cohen2019ICML} for achieving certified robustness. Specifically, RS involves taking an arbitrary base classifier (denoted by $f$), and transforms it into a "smooth" classifier, $g$ defined as:
\begin{align}
    g(x) =  \underset{c~\in~\mathcal{Y}}{\text{argmax}}~ \underset{\delta \sim \mathcal{N}(0,\sigma^2 I)}{\mathbb{P}}(f(x + \delta)=c).
\end{align}
Intuitively, for a given input $x$, the function $g(x)$ outputs the class that the base classifier $f$ predicts most frequently within the neighborhood of $x$. Unlike adversarial training, RS provides certified robustness guarantees by adding random noise to the input and taking the majority vote of the base classifier’s outputs over many noisy samples.  This smooth classifier not only retains a desirable property of certified robustness but also offers an easily computable closed-form certified radius $\epsilon$.  While RS provides provable robustness, its solution is very generic and does not exploit the specific characteristics of the modulation classification problem.


 

\section{Spectral Analysis of Adversarial Attacks on Modulation Classification}


\begin{table*}[t]
\vspace{-0.12 in}
\caption{Evaluation of Filter for Each Class with Cut-off Frequency Index $k=5$}
\vspace{-0.1 in}
\begin{center}
\resizebox{0.9\textwidth}{!}{
\begin{tabular}{|c|c|c|c|c|c|c|c|c|c|c|c|c|}

\hline
  \hline
    Metrics &  8PSK  &  AM-DSB &  AM-SSB &  BPSK &  CPFSK &  GFSK &  PAM4 &  QAM16 &  QAM64 &  QPSK &  WBFM & Averaged
 \\
 \hline

    $\eta_{be}$ (dB)& -1.62&  -0.20&  -9.79&  -1.75&  -1.01& 
       -0.29&  -1.65&  -1.62&  -1.63&  -1.59&   -0.21& -1.94

 \\
 \hline
    $\eta_{pe}$ (dB) & -2.59& -1.28& -2.99& -3.83& -2.93&
       -2.13& -1.77& -2.52& -2.51& -3.37& -1.32& -2.47

\\
 \hline

    SPR (dB) & 15.94& 15.65&  8.85& 17.40& 16.54&
       16.41& 16.12& 16.18& 16.19& 16.71& 15.68& 15.61

\\
 \hline

\end{tabular}
}
\end{center}
\label{filter_class}
\vspace{-0.2 in}    
\end{table*}
We analyze the spectral characteristics of the wireless modulation classification dataset, Radio Machine Learning (RML) 2016.a \cite{OShea2018JSTSP}.
To compare these frequency components, we calculate the DFT of data.
In DFT, the component at frequency index $k$ can be expressed as: $X_k = \sum_{n = 0}^{N-1}x_ne^{-j2\pi k n/N}.$
Where $N$ is the number of samples, $n$ is the index for the current sample in the time domain, and $x_n$ is the value of sample $n$. In our dataset, we keep $k$ in the same range as $N$ to calculate the frequency components and index frequencies from 0 to $127$. We use the I/Q data in a complex format and apply Fast Fourier Transform (FFT) to expedite processing. 
\subsection{Butterworth Low-pass Filter}
We consider the Butterworth low-pass filter to have a frequency response flatten in the passband. 
The gain function $G(\cdot)$ and frequency response function $H(\cdot)$ of an $m$th-order Butterworth low-pass filter can be expressed as: $G^2(\omega)=\left | H(j\omega) \right |^2=\frac{{G_0}^2}{1+(\frac{j\omega}{j\omega_c})^{2m}}.$
$\omega_c$ represents the cut-off frequency and $G_0$ denotes the gain at zero frequency. With a larger $m$, the cut-off is sharper, and the filtered waveform experiences more time-domain shifts. 
Therefore, we start with $m=2$
and keep the same filter frequency index by $k$ as in DFT to evaluate the impact of cut-off frequency.




\subsection{Averaged Spectral Behaviour \& Impact of Filter}
Fig. \ref{fig:filter_fig}(a) shows the average spectral behavior of the clean signals as well as the respective values for the attack signals generated using FGSM and PGD attacks. This motivates us to understand the impact on the behavior of clean and attack signals if they are passed through a low-pass filter (such as the Butterworth filter described above). We propose two metrics to evaluate the impact of the filter. The first metric is the post-/pre-filtering power ratio $\eta$, which represents the energy ratio between the passband and unfiltered signal. The second metric is the Signal-to-Perturbation Ratio (SPR), defined as the energy ratio between the benign data and perturbations.

In Fig. \ref{fig:eng_rate_comp}(a)(right), we vary the cut-off frequency of the filter and evaluate $\eta$ for both the benign data and FGSM perturbations. We observe that when the cut-off frequency index $k$ is less than $15$, the post-/pre-filtering power ratio for the benign data ($\eta_{be}$) is higher than that for the perturbations ($\eta_{pe}$). This indicates that the low-pass filter removes more perturbations than benign components with a small $k$. However, when $k$ is greater than 15, the passband ratio for the benign data and perturbations becomes similar, suggesting that the filter has a comparable impact on these two types of data. At $k = 64$, all the signals pass through the filter, resulting in a ratio of $0$ dB.
In Fig. \ref{fig:eng_rate_comp}(a)(left), we evaluate the SPR between the filtered benign data and perturbations when applying the filter with different cut-off frequencies. We evaluate FGSM and PGD attacks with $\epsilon = 0.015$ and $0.03$ as examples. When $k$ is large, the SPR for filtered signals remains the same. In contrast, when $k$ is small, the signal quality with filtering is better than in the unfiltered one. The trend is similar for all four considered attacks, suggesting designing the filter with a small $k$.
\black

\begin{figure*}
    \centering
    \includegraphics[scale = 0.65]{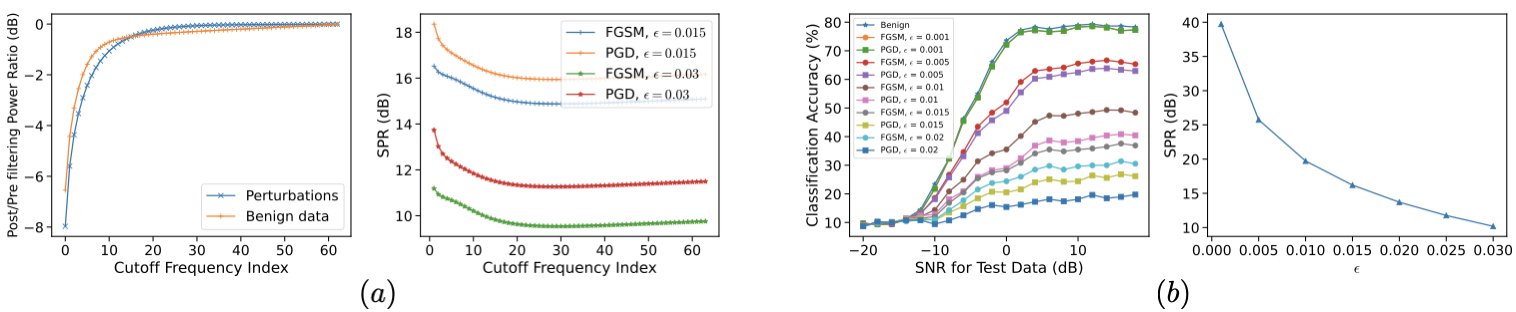}
    \vspace{-0.18 in}
    \caption{(a) Energy rate under different cut-off frequency index: left: Passband signal rate, right: SPR. (b) Classification accuracy under AML attacks : left: Accuracy vs. SNR under attacks of various $\epsilon$, right: SPR vs. $\epsilon$ for FGSM attacks. }
    \label{fig:eng_rate_comp}
    \vspace{-0.2 in}
\end{figure*}

\subsection{Impact of Filtering on Individual Subclasses}
To estimate the impact of the filter on each class of data, we calculate the metrics shown in Table \ref{filter_class}. We observe that $\eta_{be}$ can be around 1 dB higher than $\eta_{pe}$, and the SPR is improved to greater than 15 dB for most classes. However, the filter does not improve $\eta$ and SPR for AM-SSB. This suggests that the symmetric nature of our filter design may compromise the single-sideband modulated signal. 
To enhance the robustness of filter-based mechanisms, we introduce Filtered Randomized Smoothing in the next section, where we propose two types of Filtered Randomized Smoothing a) pre-smoothing filtering and b) post-smoothing filtering. 
\section{ Filtered Randomized Smoothing}\label{sec: filter_RS}




In this section, we introduce the details of the certifying process of the smooth classifier. We first illustrate the theoretical results of the robust classifier based on randomization smoothing (RS). Then, we provide the robustness guarantee of the combination of the filter and RS. We finally discuss how we implement the certifying process in the radio machine learning (RML) dataset. 

We assume the base classifier \( f \) identifies the most probable class \( c_A \) with probability \( p_A \), and the second most likely class with probability \( p_B \). We denote $\underline{p_A}$ as the lower bound of $p_A$, $\overline{p_B}$ as the upper bound of $p_B$. 
To integrate a filter with randomized smoothing, there are two distinct approaches: a) Pre-smoothing filtering (denoted by Pre-FRS): applying the filter before Gaussian noise augmentation, and b) Post-smoothing filtering (denoted by Post-FRS): injecting the filter after Gaussian noise augmentation. Randomized smoothing mechanisms are known for their scalability across various black-box mechanisms. Thus, introducing the filter post-noise augmentation does not compromise the theoretical assurances of randomized smoothing, as outlined in Theorem \ref{the: certified_radius}. For approach a), it is necessary to first estimate the Lipschitz constant of the filter, followed by deriving the corresponding certified radius as shown in Theorem \ref{the:filter_rs}.

We first show the robustness guarantee of the randomized smoothing (Post-smoothing filtering) as follows:

\begin{theorem}\cite{Cohen2019ICML}\label{the: certified_radius} 
\textit{(Post-smoothing filtering)}
Let \( f : \mathbb{R}^n \to \mathcal{Y}\) represent any deterministic or stochastic function, with \( \epsilon \sim \mathcal{N}(0, \sigma^2 I) \). Defining \( g \) as per Equation (1) and with \( c_A \) specified, if \( \underline{p_A}, \overline{p_B} \in [0, 1] \) meet the criteria

\begin{equation}
 \mathbb{P}(f(x + \epsilon) = c_A) \geq \underline{p_A} \geq \overline{p_B} \geq \underset{c \neq c_A}{\text{max}}~\mathbb{P}(f(x + \epsilon) = c). 
\end{equation}

Then \( g(x + \delta) = c_A\) for all \( \|\delta\|_2 < R \), where:

\begin{equation}
 R_{\text{Post-FRS}} = \frac{\sigma}{2}(\Phi^{-1}(\underline{p_A}) - \Phi^{-1}(\overline{p_B})) 
 \vspace{-0.05 in}
\end{equation}
where \( \Phi^{-1} \) denotes the inverse of the standard Gaussian CDF.
\vspace{-0.2 in}
\end{theorem}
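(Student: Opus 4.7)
The plan is to reproduce the argument of Cohen et al.\ by reducing the worst-case analysis of the smoothed classifier to a two-point hypothesis test between two shifted Gaussians and invoking the Neyman--Pearson lemma, which identifies the extremal decision regions as half-spaces orthogonal to $\delta$.

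First, I would fix an arbitrary perturbation $\delta$ with $\|\delta\|_2 < R_{\text{Post-FRS}}$ and reformulate the goal: to conclude $g(x+\delta) = c_A$, it suffices to show that $\mathbb{P}(f(x+\delta+\epsilon)=c_A) > \mathbb{P}(f(x+\delta+\epsilon)=c)$ for every class $c \neq c_A$. I would introduce the two Gaussian measures $\mu = \mathcal{N}(x,\sigma^2 I)$ and $\nu = \mathcal{N}(x+\delta,\sigma^2 I)$, so that for any measurable $S$ we have $\mathbb{P}(f(x+\epsilon) \in S) = \mu(f^{-1}(S))$ and $\mathbb{P}(f(x+\delta+\epsilon) \in S) = \nu(f^{-1}(S))$; the question is now purely one about how much $\nu$-mass a set can gain or lose relative to its $\mu$-mass.

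Second, I would execute the Neyman--Pearson step to bound $\nu(A)$ given a constraint on $\mu(A)$. Because $\log(d\nu/d\mu)(z) = \delta^\top(z-x)/\sigma^2 - \|\delta\|_2^2/(2\sigma^2)$ is affine with gradient along $\delta$, the NP-optimal level sets are half-spaces of the form $\{z:\delta^\top(z-x)\leq t\}$. A direct one-dimensional computation, obtained by projecting onto $\delta/\|\delta\|_2$ and reducing to a comparison of $\mathcal{N}(0,\sigma^2)$ with $\mathcal{N}(\|\delta\|_2,\sigma^2)$, then yields the tight bounds
$$\inf\{\nu(A):\mu(A)\geq p\} = \Phi\bigl(\Phi^{-1}(p) - \|\delta\|_2/\sigma\bigr),$$
$$\sup\{\nu(A):\mu(A)\leq p\} = \Phi\bigl(\Phi^{-1}(p) + \|\delta\|_2/\sigma\bigr).$$
Applying the lower bound to $A=f^{-1}(c_A)$ (using $\mu(A)\geq\underline{p_A}$) and the upper bound to $A=f^{-1}(c)$ for $c\neq c_A$ (using $\mu(A)\leq\overline{p_B}$) gives $\mathbb{P}(f(x+\delta+\epsilon)=c_A)\geq \Phi(\Phi^{-1}(\underline{p_A})-\|\delta\|_2/\sigma)$ and $\mathbb{P}(f(x+\delta+\epsilon)=c)\leq \Phi(\Phi^{-1}(\overline{p_B})+\|\delta\|_2/\sigma)$.

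Third, using monotonicity of $\Phi$, the required strict inequality between these two quantities reduces to $\Phi^{-1}(\underline{p_A})-\|\delta\|_2/\sigma > \Phi^{-1}(\overline{p_B})+\|\delta\|_2/\sigma$, which rearranges exactly to $\|\delta\|_2 < \tfrac{\sigma}{2}(\Phi^{-1}(\underline{p_A})-\Phi^{-1}(\overline{p_B})) = R_{\text{Post-FRS}}$, as claimed. Since the statement permits $f$ to be any (possibly stochastic) map into $\mathcal{Y}$, the Post-FRS interpretation is that we may absorb the low-pass filter into $f$ as part of the downstream classification pipeline; because the Gaussian noise is injected \emph{before} the filter, the smoothing argument is entirely unaffected by the filter's presence, and no Lipschitz term enters the radius. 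The main obstacle is the Neyman--Pearson reduction itself: rigorously justifying that half-spaces perpendicular to $\delta$ simultaneously attain the infimum and supremum above, and that the optimal values collapse to the clean expression $\Phi(\Phi^{-1}(p)\pm\|\delta\|_2/\sigma)$. Once the one-dimensional projection is taken and the monotone likelihood ratio property of shifted Gaussians is invoked, the half-space extremality and closed-form quantiles follow, and the remaining manipulations are purely algebraic.
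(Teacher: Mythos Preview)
Your proposal is correct and faithfully reproduces the Neyman--Pearson argument of Cohen et al.\ \cite{Cohen2019ICML}, which is exactly what the paper invokes: the paper does not supply an independent proof of this theorem but simply cites it, treating randomized smoothing as a black-box guarantee and absorbing the post-noise filter into the base classifier $f$. Your remark that the filter can be folded into $f$ because the Gaussian noise is injected upstream is precisely the paper's justification for why the Post-FRS radius coincides with the standard RS radius, so there is nothing to add.
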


\begin{remark}
    The above result does not presuppose any specific characteristics about $f$, highlighting its scalability to large models, whose properties may be difficult to estimate. In addition, the value of the certified radius $R$ increases with a higher noise level $\sigma$. Note that a high value of $\sigma$ may sacrifice the models' utility at the same time. Therefore, there exists a trade-off between robustness and accuracy, which can be navigated by tuning the noise parameter.
\end{remark}

We next present the theoretical robustness guarantee of Pre-smoothing filtering in the following Theorem:
\begin{theorem}\label{the:filter_rs}
\textit{(Pre-smoothing filtering)}
Let us denote $L_{lip}$ as the Lipschitz constant of the filter, and $R_{rs}$ as the certified radius of the randomized smoothing classifier with confidence $1-\alpha$. Therefore, the certified radius of the filter (pre-noise) smoothing mechanism with confidence $1-\alpha$ is:
\begin{align}
    R_{\text{Pre-FRS}} = \frac{R_{\text{rs}}}{L_{\text{lip}}}.
\end{align}
\end{theorem}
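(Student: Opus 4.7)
The plan is to exploit the compositional structure of Pre-FRS: the classifier is $h(x) = g(\mathcal{F}(x))$, where $\mathcal{F}$ is the filter and $g$ is the randomized smoothing classifier (with Gaussian noise added \emph{after} the filter). Since Theorem~\ref{the: certified_radius} already gives a certified radius $R_{\text{rs}}$ for $g$ in the input space of $g$, the job reduces to translating a budget on perturbations of $x$ into a budget on perturbations of $\mathcal{F}(x)$.

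First, I would write out what certified robustness of $h$ means: for a perturbation $\delta$ with $\|\delta\|_2 \leq \epsilon$, we want $h(x+\delta) = h(x)$, i.e., $g(\mathcal{F}(x+\delta)) = g(\mathcal{F}(x))$, with confidence at least $1-\alpha$. By Theorem~\ref{the: certified_radius} applied to $g$, a sufficient condition is
\begin{equation}
\|\mathcal{F}(x+\delta) - \mathcal{F}(x)\|_2 \leq R_{\text{rs}}.
\end{equation}
Second, I would invoke the Lipschitz assumption on the filter. By definition of $L_{\text{lip}}$,
\begin{equation}
\|\mathcal{F}(x+\delta) - \mathcal{F}(x)\|_2 \leq L_{\text{lip}}\,\|\delta\|_2 \leq L_{\text{lip}}\,\epsilon.
\end{equation}
Combining the two inequalities, a sufficient condition for $h(x+\delta)=h(x)$ is $L_{\text{lip}}\,\epsilon \leq R_{\text{rs}}$, i.e., $\epsilon \leq R_{\text{rs}}/L_{\text{lip}}$. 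Taking the largest such $\epsilon$ yields the claimed certified radius $R_{\text{Pre-FRS}} = R_{\text{rs}}/L_{\text{lip}}$.

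The main obstacle (and the only nontrivial point) is justifying that Theorem~\ref{the: certified_radius} can be applied in the filtered domain. Concretely, one has to observe that the smoothed classifier $g$ is defined by $g(z) = \operatorname{argmax}_c \mathbb{P}_{\eta\sim\mathcal{N}(0,\sigma^2 I)}(f(z+\eta)=c)$, so its certified radius is a statement about arbitrary $\ell_2$ perturbations of its input $z$; the fact that the specific perturbation here has the structured form $z = \mathcal{F}(x+\delta)-\mathcal{F}(x)$ is irrelevant to Cohen et al.'s argument, as only its $\ell_2$ norm enters. A secondary obstacle is the Lipschitz constant itself: for a linear time-invariant filter (such as the Butterworth filter used in the paper), $L_{\text{lip}}$ is simply the operator $\ell_2$-norm of the filter matrix, which equals the maximum magnitude of its frequency response and is directly computable. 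This is worth noting as a short remark after the proof so that the bound in the theorem is not vacuous in practice.
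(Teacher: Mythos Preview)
Your proposal is correct and follows essentially the same approach as the paper: both exploit the compositional structure $f_{\text{rs}}\circ f_{\text{filter}}$, invoke the Lipschitz bound of the filter to control $\|f_{\text{filter}}(x+\delta)-f_{\text{filter}}(x)\|_2$, and then apply the certified radius of the RS classifier in the filtered domain to obtain $R_{\text{Pre-FRS}}=R_{\text{rs}}/L_{\text{lip}}$. Your additional remarks (that Theorem~\ref{the: certified_radius} only depends on the $\ell_2$ norm of the perturbation in $g$'s input space, and that for an LTI Butterworth filter $L_{\text{lip}}$ equals the operator norm / maximum frequency-response magnitude) are not in the paper's proof but are helpful clarifications rather than a different argument.
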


\begin{proof}
\vspace{-5pt}
    To simplify the notation, we use $f_{\text{filter}}, f_{\text{rs}}$ to denote the filter and randomized smoothing (RS) classifiers respectively. From the definition of Lipschitz constant of $f_{\text{filter}}$, we note that
    \begin{align}
        \parallel f_{\text{filter}}(x) - f_{\text{filter}}(x') \parallel_2 \leq L_{\text{lip}}\parallel x - x'\parallel_2. \label{eq: th2_10}
    \end{align}
    In addition, we are given the RS classifier $f_{\text{rs}}$ has a certified radius $R_{\text{rs}}$ with probability $(1-\alpha)$. Since the output of the filter is an input to the RS classifier,   therefore, to ensure the certified robustness of the pre-smoothing filtering classifier ($f_{\text{rs}} \circ f_{\text{filter}} $), we require:
    \begin{align}
        \parallel f_{\text{filter}}(x) - f_{\text{filter}}(x') \parallel_2 \leq L_{\text{lip}}\parallel x - x'\parallel_2 \leq R_{\text{rs}}. \label{eq: th2_12}
    \end{align} 
    From the above inequality, we can arrive at the claim that for all $(x,x')$, such that 
   $\parallel x - x'\parallel_2 \leq \frac{R_{\text{rs}}}{L_{\text{lip}}}$, the decision of $f_{\text{rs}} \circ f_{\text{filter}} $ will remain the same with probability $(1-\alpha)$. Therefore, the certified radius of $f_{\text{rs}} \circ f_{\text{filter}} $ is given by $\frac{R_{\text{rs}}}{L_{\text{lip}}}$ completing the proof of the Theorem. 
\end{proof}
\begin{remark}
    The Lipschitz constant not only quantifies the stability of the filter in pre-smoothing filtering approaches but also helps balance the trade-off between robustness and accuracy. Specifically, a smaller Lipschitz constant can enhance certified robustness, but this often comes at the cost of reduced certified test accuracy.
\end{remark}

\noindent \textbf{Inference and Certification for Modulation Classification}. 
During the filtered-RS certifying process, the smoothed classifier makes predictions on the noisy samples for each filtered input. The smoothed classifier's output is then the class that has the majority vote among all these noisy samples. After getting the prediction of the top class $c_A$ and the runner-up class $c_B$ of the smoothed classifier, we calculate the radius of robustness, which is the size of the perturbation the classifier can tolerate without changing its prediction. This is done by estimating the probability of the predicting top class $c_A$ ($c_B$, respectively), namely $p_A$ ($p_B$, respectively).
To this end, each test-filtered input has its corresponding prediction and certified radius. Note that if the input is filtered before injecting the noise (pre-smoothing filtering), we need to estimate the Lipschitz constant of the filter. The certified radius of the pre-smoothing filtered RS is stated in Theorem \ref{the:filter_rs}. For the input filtered after injecting the noise, the certified radius is the same as the certified radius of randomized smoothing (Theorem \ref{the: certified_radius}).  

\begin{figure*}
    \centering
    \includegraphics[scale=0.68]{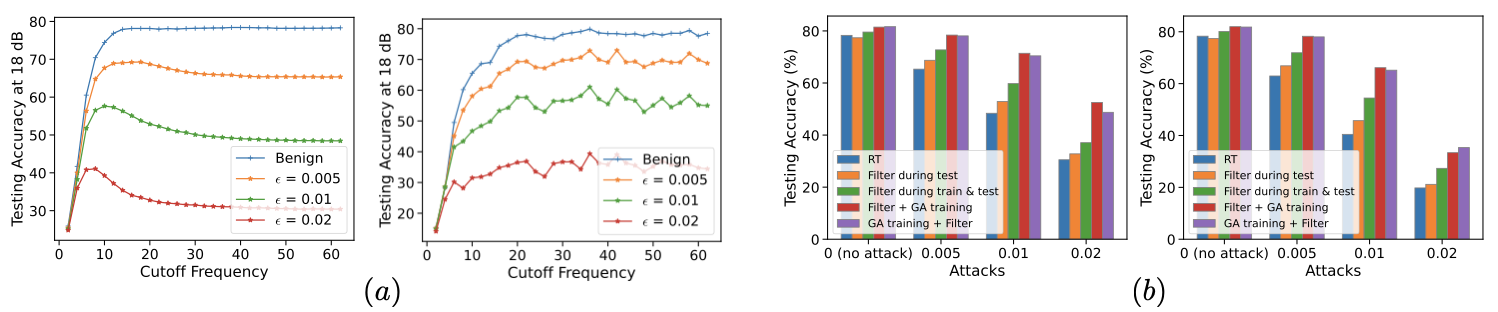}
    \vspace{-0.18 in}
    \caption{(a) Impact of cut-off frequency when applying the filter-based defense: left: During testing, right: during both training and testing. (b)Evaluation of the filter-based defense: left Tested under FGSM attacks, right: tested under PGD attacks.}
    \label{fig:cutoff_comp}
    \vspace{-0.13 in}
\end{figure*}
\begin{figure*}[t]
    \centering
    \includegraphics[scale=0.4]{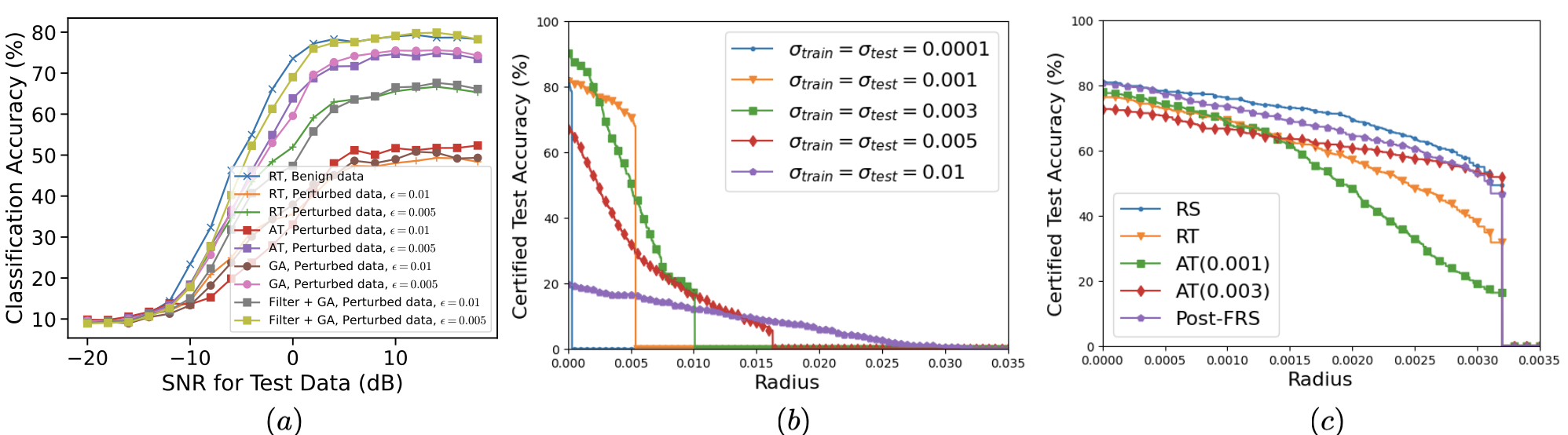}
    \vspace{-0.11 in}
    \caption{(a) Comparison of different defenses during testing. (b) The trade-off between Robustness and Accuracy under different values of variance ($\sigma_{train} = \sigma_{test}$). We observe that when $\sigma_{test} = 0.001$, our classifier can achieve a better trade-off between robustness and accuracy. (c) The trade-off between Robustness and Accuracy under different models with $\sigma_{test} = 0.001$, where RS represents that the model is trained with Gaussian noise, RT represents regular training, AT($\epsilon$) denotes that the model is trained using AT with attack budget of $\epsilon$, RS + Filter represents that we filter the noise samples during training (post-noise filtering).}
    \label{final_compare}
    \vspace{-0.2 in}
\end{figure*}

\section{Performance Evaluation}
In this section, we present our experimental findings. We first detail the dataset, classifier, and corresponding experimental settings. Subsequently, we discuss the impact of the filter during the testing phase. Finally, we present the certified test results for the Filtered Randomized Smoothing classifier.

\subsection{Dataset, Classifier, and Attack Descriptions}
We consider the RML 2016.10a dataset \cite{OShea2018JSTSP} with the corresponding
modulation classifier (VT-CNN2) proposed by O'Shea et al. . This dataset includes noisy I/Q samples for 11 modulation schemes: 8PSK, BPSK, QPSK, QAM16, QAM64, CPFSK, GFSK, PAM4, WBFM, AM-DSB, and AM-SSB. Each modulation scheme is represented in 1,000 windows of samples for each given SNR, with the SNR varying from -20 dB to 18 dB in steps of 2 dB, resulting in a total of 220,000 windows of samples. The RML dataset has a window size of 128 samples (I/Q pairs), with a stride of 64. To reduce the impact of resampling, we use 50\% of the data for training, 5\% for validation and early stopping, and 45\% for testing. 
We compare Filtered Randomized Smoothing (FRS) with the following baselines: a) RT: classifier obtained via regular training classifier, b) AT: classifier obtained via adversarial training, c) GA: classifier with Gaussian noise augmentation (adding Gaussian noise during the training process), and d) RS: randomized smoothing classifier (adding Gaussian noise during both the training, inference and certification).

We demonstrate in Fig. \ref{fig:eng_rate_comp}(b)(left) that the VT-CNN2 classifier \cite{OShea2018JSTSP} exhibits lower accuracy under $\ell_2$ normed attacks compared to benign testing. With a higher $\epsilon$, the attack becomes stronger, resulting in lower classification accuracy. To compare the energy of the perturbation with the benign signal, we use SPR as the measurement. As shown in Fig. \ref{fig:eng_rate_comp}(b)(right), the larger $\epsilon$ is expected to result in a lower SPR. 
\vspace{-0.05 in}
\subsection{Selection of Frequency Parameters}
\vspace{-0.05 in}
When we fix $m$, the cut-off frequency $\omega_c$ plays the most important role in filter design. We consider the two application scenarios: (i) applying the filter as a plug-in unit during training, and (ii) applying the filter during both training and testing. In Fig. \ref{fig:cutoff_comp}(a)(left), we observe that the defender's accuracy for (i) starts at a low point and increases with the \textit{cut-off} frequency index $k$. 
This occurs because the low-pass filter allows only a small part of the frequency component to pass through when $k$ is low. Consequently, the classifier cannot accurately identify the waveform with limited information in the filtered signal. 
We also observe that accuracy starts dropping after $k$ exceeds a certain threshold. 
This is because the filter allows more frequency components in perturbations to pass through, resulting in a saturated accuracy similar to the case without applying the filter.
These trends are similar when tested under FGSM attacks with $\epsilon \in \{0.005, 0.01, 0.02\}$. In Fig. \ref{fig:cutoff_comp}(a)(right), we illustrate the accuracy for (ii) under different attacks as we increase $k$, which saturates after a certain threshold. Therefore, we select $k$ to be $20$ since it gives us the highest accuracy under all attacks and represents the turning point presented in Fig. \ref{fig:cutoff_comp}(a)(right). Another observation is that the achievable accuracy of (ii) is higher than (i). 
\vspace{-0.05 in}
\subsection{Impact of Filtering}
\vspace{-0.05 in}
\textbf{Filtering and Gaussian Noise Augmentation during Testing}.
We also evaluate the impact of different enhancements in the filter-based defense under both FGSM and PGD attacks with different values of $\epsilon$ (attack budget). 
In addition to comparing the filter applied in different phases, we combine Gaussian randomization with the filter-based approach. Since the order of adding noise can impact certified robustness, we consider both adding noise before and after filtering.
As shown in Fig. \ref{final_compare}(a), adopting the filter-based defense improves the defender's accuracy under all considered attacks.
By combining filter design with Gaussian randomization, the defender's accuracy gets further improved. Comparing the red bar with blue one, the proposed approach remains
effective even when $\epsilon$ takes larger values.  On average, our proposed defense increases the accuracy by $19.37 \%$ for FGSM and $18.21 \%$ for PGD. This indicates that our approach despite not relying on the type of attacks, can still provide robustness.



When the attacker has a small $\epsilon$ (0.005), both AT and GA can effectively increase the accuracy under attacks. However, when $\epsilon$ is relatively large, these two defense mechanisms lose effectiveness. In contrast, our approach can still significantly enhance the defense accuracy even with a large $\epsilon$. In addition, the proposed filter-based approach outperforms the other two in both regimes of $\epsilon$.


\textbf{Filtered Randomized Smoothing \& Certification}.
We explore the trade-off between  Robustness and Accuracy for the RML dataset. Following previous works \cite{Cohen2019ICML}, we set the confidence parameter $\alpha = 0.001$, e.g., with probability $99.9\%$, the radius returned by $g$ is truly robust. Our certified test accuracy refers \textit{approximate certified test set accuracy} \cite{Cohen2019ICML}, denoted as the proportion of the test dataset that the smooth classifier $g$ correctly identifies (without abstaining) and confirms as robust if a certified radius R of the input $x$ greater than or equal to r (given values, such as 0.01, 0.02, etc.). During certification, we use $10,000$ augmented noise samples to estimate the certified radius. As shown in Fig. \ref{final_compare}(b), we found that $\sigma = 0.001$ achieves better results compared to other values of variance. Therefore, we use $\sigma = 0.001$ as the noise variance.

We now explore the effect of different training mechanisms in the trade-off between robustness and accuracy, as shown in Fig. \ref{final_compare}(c). We can observe that the RS consistently outperforms other models. In addition, we can observe that RS outperforms AT with $\epsilon=0.001$ and $\epsilon=0.003$.
We now study the performance of the filtered RS w.r.t the trade-off between robustness and accuracy as shown in Fig. \ref{final_compare}(c). 
The classifier with Post-FRS performs better than AT classifiers. Overall, Post-FRS achieves relatively high trade-off between robustness and accuracy.

\section{Conclusions}
In this paper, filtered randomized smoothing (FRS), a new defense against adversarial attacks was presented, which combines low-pass filtering and randomized smoothing. We demonstrated that adversarial perturbations exhibit different spectral features than benign data, and applying a low-pass filter can mitigate their impact without significantly degrading signal quality. Combining Gaussian noise-based smoothing with filtering can further enhance classifier accuracy under adversarial attacks. Theoretical results were presented which can be used to compute the certified accuracy of FRS-based classifiers. In addition, extensive experimental results on validating the proposed FRS defense were provided. We presented that FRS outperforms conventional defenses, such as AT, and RS achieving higher certified test accuracy for a wide range of channel conditions and larger attack budgets.

\section{Acknowledgment}
This research was supported in part by NSF (grants 2229386, 1822071, 2100013,  2209951, 1651492 and 2317192), by the Broadband Wireless Access $\&$ Applications Center (BWAC), U.S. Department of Energy, Office of Science, Office of Advanced Scientific Computing under Award Number DE-SC-ERKJ422, and by NIH through Award 1R01CA261457-01A1. Any opinions, findings, conclusions, or recommendations expressed in this paper are those of the author(s) and do not necessarily reflect the views of the sponsors.

\bibliographystyle{IEEEtran}
\bibliography{References.bib}

\end{document}